\title{GDBA Revisited: Unleashing the Power of Guided Local Search for \\  Distributed Constraint Optimization}
\author {
    % Authors
    Yanchen Deng\textsuperscript{\rm 1},
    Xinrun Wang\textsuperscript{\rm 2}\thanks{Corresponding author.},
    Bo An\textsuperscript{\rm 1}
}
\title{My Publication Title --- Single Author}
\author {
    Author Name
}
\title{My Publication Title --- Multiple Authors}
\author {
    % Authors
    First Author Name\textsuperscript{\rm 1,\rm 2},
    Second Author Name\textsuperscript{\rm 2},
    Third Author Name\textsuperscript{\rm 1}
}
\begin{document}

\maketitle

\begin{abstract}
Local search is an important class of incomplete algorithms for solving Distributed Constraint Optimization Problems (DCOPs) but it often converges to poor local optima. While Generalized Distributed Breakout Algorithm (GDBA) provides a comprehensive rule set to escape premature convergence, its empirical benefits remain marginal on general-valued problems. In this work, we systematically examine GDBA and identify three factors that potentially lead to its inferior performance, i.e., over-aggressive constraint violation conditions, unbounded penalty accumulation, and uncoordinated penalty updates. To address these issues, we propose Distributed Guided Local Search (DGLS), a novel GLS framework for DCOPs that incorporates an adaptive violation condition to selectively penalize constraints with high cost, a penalty evaporation mechanism to control the magnitude of penalization, and a synchronization scheme for coordinated penalty updates. We theoretically show that the penalty values are bounded, and agents play a potential game in DGLS. Extensive empirical results on various benchmarks demonstrate the great superiority of DGLS over state-of-the-art baselines. Compared to Damped Max-sum with high damping factors, our DGLS achieves competitive performance on general-valued problems, and outperforms by significant margins on structured problems in terms of anytime results.
\end{abstract}

\begin{links}
\link{Code}{https://github.com/ycdeng-ntu/DGLS}
\link{Extended version}{https://arxiv.org/pdf/2508.06899}
\end{links}

\section{Introduction}
Distributed Constraint Optimization Problems (DCOPs) \cite{modi2005adopt,fioretto2018distributed} are a fundamental formalism for cooperative multi-agent systems where a set of autonomous agents coordinate with each other to pursue a global objective via localized communication. DCOPs have been successfully applied to model various real-world applications, including scheduling \cite{pertzovsky2024collision}, resource allocation \cite{monteiro2012multi}, and smart grids \cite{fioretto2017distributed}.

Efficiently solving DCOPs remains a long-standing challenge due to the inherent NP-hardness. Over the past decades, many DCOP algorithms have been proposed and are generally categorized into \emph{complete} algorithms and \emph{incomplete} algorithms, according to whether they guarantee finding the optimal solutions. Complete algorithms include distributed backtracking search \cite{modi2005adopt,chechetka2006no,yeoh2010bnb,netzer2012concurrent,litov2017forward,dengCCJL19} and inference \cite{petcuscalable,petcu2007mb,chen2020rmb}, which exhaust the solution space by either branch-and-bound or bucket elimination \cite{dechter98}. However, the coordination overheads of these algorithms scale exponentially w.r.t. the problem size, making them unsuitable for large-scale applications.

On the other hand, incomplete algorithms trade the optimality for practical computational overheads \cite{farinelli2008decentralised,cohen2020governing,chenDWH18,nguyen2019distributed,ottens2017duct}. Among them, local search \cite{Zhang2005Distributed,maheswaran2004distributed,Pearce2007Quality,zivan2014explorative,hoang2018large} is an important class of incomplete algorithms, which iteratively refines the solution via local moves. However, these algorithms often prematurely converge to poor local optima due to their greedy nature. As an instantiation of Guided Local Search (GLS) \cite{voudouris2010guided}, Generalized Distributed Breakout Algorithm (GDBA) \cite{okamoto2016distributed} was introduced to provide a comprehensive rule set for breaking out of the local optima by adapting the notion of constraint violation and  cost increase in DBA \cite{hirayama2005distributed}. However, the empirical benefits of GDBA often appear to be marginal compared to well-established baselines like DSA \cite{Zhang2005Distributed} on general-valued problems.

To understand why GDBA underperforms on general-valued problems, we first conduct a pilot study (cf.~Figure~\ref{fig:main}) analyzing the penalty dynamics of GDBA on both random DCOPs and structured problems. Our observations reveal that, on general-valued instances, GDBA uniformly accumulates penalties for nearly all constraints, making them receive heavy but similar attention, which offsets the benefit of breakout. Such ineffective penalization stems from over-aggressive constraint violation conditions (e.g., $NM$) that classify most constraints as violated, monotonic penalty increases that lead to unbounded penalty accumulation, and uncoordinated penalty updates that cause agents to optimize misaligned objective functions.
In light of this, we present a novel Distributed Guided Local Search (DGLS) framework for DCOPs, which incorporates an adaptive constraint violation condition based on the costs of each constraint, an evaporation mechanism to avoid unbounded penalty accumulation, and a synchronization scheme to enforce coordinated penalty updates. Specifically, our contributions are:
\begin{itemize}
    \item We systematically examine the key design choices of GDBA. We find that the over-aggressive constraint violation conditions, monotonic penalty increment and uncoordinated penalty update would lead to inferior performance on general-valued DCOPs.
    \item We present a novel DGLS framework that enables efficient GLS for DCOPs. Our DGLS incorporates an adaptive constraint violation condition, an evaporation mechanism, and a penalty synchronization scheme to fully unleash the performance of GLS in solving DCOPs. We also theoretically show that the penalty values are bounded, and agents play a potential game in our DGLS. 
    \item We compare our DGLS with state-of-the-art DCOP algorithms on various standard benchmarks. Our extensive empirical results show the great potential of DGLS: on general-valued problems and scale-free network, DGLS is able to match or perform slightly better than Damped Max-sum (DMS) \cite{cohen2020governing} with a high damping factor, while DGLS outperforms DMS by significant margins on structured problems including 2D lattices, meeting scheduling and weighted graph coloring in terms of anytime performance \cite{zilberstein1996using,zivan2014explorative}.
\end{itemize}
\section{Preliminaries}\label{sec:pre}
In this section, we review necessary preliminaries including DCOPs, GLS and GDBA.
\subsection{Distributed Constraint Optimization Problems}
A DCOP \cite{modi2005adopt} can be formalized as a tuple $\langle I,X,D,F\rangle$ where $I=\{1,\dots,|I|\}$ is the set of agents, $X=\{x_1,\dots,x_{|X|}\}$ is the set of variables, $D=\{D_1,\dots,D_{|X|}\}$ is the set of discrete domains, and $F=\{f_1,\dots,f_{|F|}\}$ is the set of constraint functions. Each variable $x_i$ takes a value from its domain $D_i$, and each constraint function $f_i:D_{i_1}\times\cdots \times D_{i_k}\rightarrow \mathbb{R}_{\ge 0}$ defines a cost for each possible combination of variables $scp(f_i)=(x_{i_1},\dots,x_{i_k})$. The objective is to find a solution $\tau^*=\left(d_1^*,\dots,d_{|X|}^*\right)$ such that the total cost is minimized:
\begin{equation}
    \tau^*=\mathop{\arg\min}_{\tau\in\prod_i D_i}\sum_{f_j\in F}f_j\left(\tau|_{scp(f_j)}\right),\label{eq:dcop_obj}
\end{equation}
where $\tau|_{scp(f_j)}$ is the projection of $\tau$ onto $scp(f_j)\subseteq X$. For the sake of simplicity, we follow the common assumptions that each agent controls only one variable (i.e., $|I|=|X|$) and all constraints are binary (i.e., $f_{ij}: D_i\times D_j\rightarrow\mathbb{R}_{\ge 0}, \forall f_{ij}\in F$). Therefore, the terms ``agent'' and ``variable'' can be used interchangeably.
\subsection{Guided Local Search}
GLS \cite{voudouris2010guided} is a metaheuristic for helping local search algorithms to escape local optima using a penalty system. Specifically, GLS considers the following augmented objective function:
\begin{equation}
    h(\tau)=f(\tau)+\lambda\sum_i p_i\cdot\mathbb{I}[\text{feature }i\text{ presents in } \tau], \label{eq:gls_obj}
\end{equation}
where $f(\cdot)$ is the original objective function, $p_i$ is the penalty associated with feature $i$, $\mathbb{I}$ is the indicator function, and $\lambda>0$ is the weight to balance the original objective and penalty. Here, the features are problem-specific. For example, a feature could be an edge from city A to city B in a Traveling Salesman Problem (TSP), or whether a hard clause is satisfied in Boolean Satisfiability (SAT) \cite{cai2020old}.
When local search converges to a local optimum, GLS will select a subset of features presented in the incumbent solution to increase the associated penalty value, so as to force local search to explore novel solutions. In other words, GLS breaks out of the local optimum by modifying the problem's objective landscape.
\subsection{Generalized Distributed Breakout Algorithm}
GDBA \cite{okamoto2016distributed} provides a comprehensive set of rules for breaking out of local optima in DCOPs, which adapts the concepts of constraint violation and cost increase in DBA \cite{hirayama2005distributed} for solving Distributed Constraint Satisfaction Problems (DisCSPs) \cite{yokoo1998distributed}. Essentially, GDBA can be viewed as an instantiation of GLS where the features can be a full constraint function, a specific row or column within it, or a single cost cell, depending on the scope of changes to the penalty values during breakouts. Specifically, when a Quasi Local Minimum (QLM) \cite{hirayama2005distributed} is detected, each agent in QLM first identifies a set of violated constraints based on the cost values under the incumbent local solution, then \emph{independently} increases the penalty associated with the features in these violated constraints by 1, which corresponds to selective penalization in GLS. Besides additive penalty like Eq.~(\ref{eq:gls_obj}), GDBA's variants also consider the multiplicative penalty:
\begin{equation}
    h(\tau)=\sum_{f_{ij}\in F}f_{ij}(d_i,d_j)\cdot\left[1+M_{ij}(d_i,d_j)\right],\label{eq:mul-obj}
\end{equation}
where $\tau=(d_1,\dots,d_{|X|})$ is a solution, $M_{ij}$ is the cost modifier (i.e., a matrix of penalty values) associated with constraint function $f_{ij}$ with the initial value of 0.
\section{Distributed Guided Local Search} \label{sec:main}
In this section, we present the Distributed Guided Local Search (DGLS) framework. We first motivate our research by analyzing the key design choices of GDBA. Then we detail DGLS and theoretically analyze its properties.
\subsection{Motivation}\label{sec:moti}
To analyze the dynamics of penalty values of GDBA, we consider its widely used variant $\langle M,NM,T\rangle$ on both sparse random DCOPs (RND) and meeting scheduling (MS) \cite{zivan2014explorative,maheswaran2004taking} where GDBA demonstrates notably inferior and superior performance, respectively (cf.~Section~\ref{sec:exp}). Figure~\ref{fig:sub1} plots the mean penalty values in all cost modifiers against iterations.
\begin{figure}[t]
    \centering
    \begin{subfigure}{0.495\linewidth}
        \centering
        \includegraphics[width=\linewidth]{./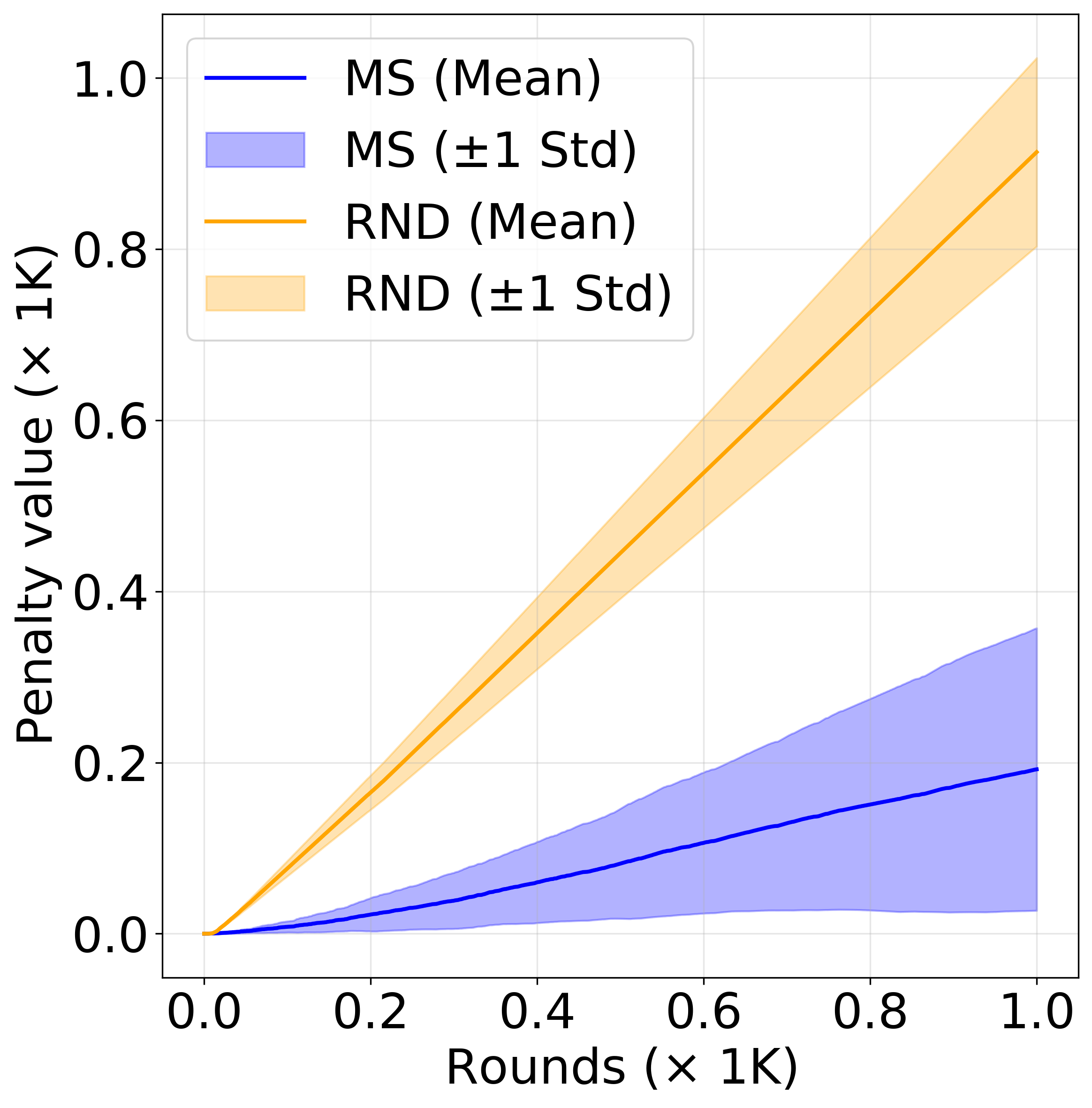}
        \caption{Mean of penalty values}
        \label{fig:sub1}
    \end{subfigure}
    \hfill % Add space between figures
    \begin{subfigure}{0.495\linewidth}
        \centering
        \includegraphics[width=\linewidth]{./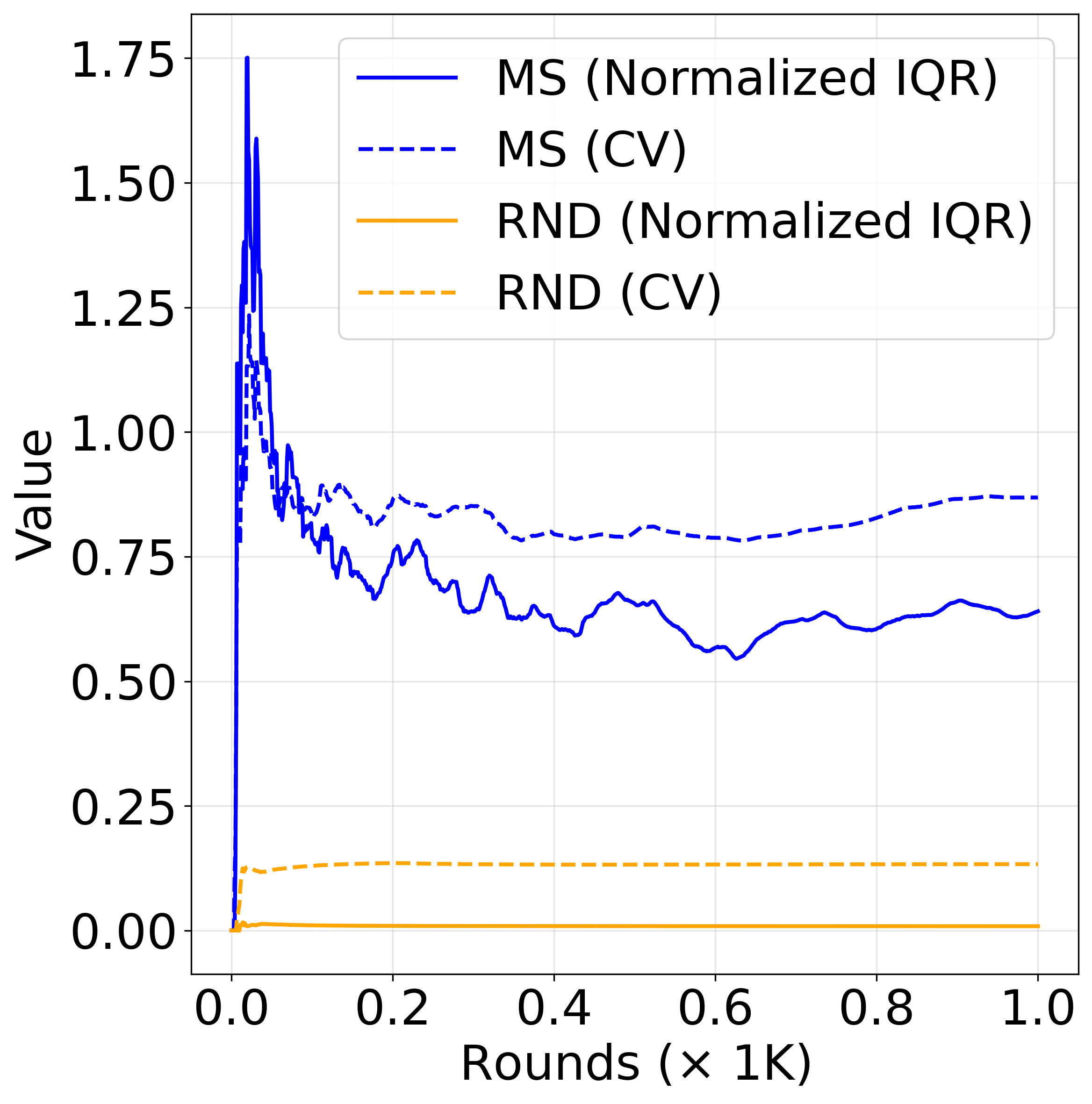}
        \caption{Variability of penalty values}
        \label{fig:sub2}
    \end{subfigure}
    \caption{Analysis of penalty dynamics of GDBA}
    \label{fig:main}
\end{figure}

It can be seen that the average penalty grows linearly in random DCOPs, meaning that almost every constraint is penalized in most rounds. That is due to the fact that the non-minimum ($NM$) violation condition is over-aggressive for general-valued constraints where the minimum cost entries are sparse. As a result, most constraints are considered as violated and the corresponding cost modifiers are increased once a QLM is detected. In contrast, when solving meeting scheduling problem where multiple minimum costs present in each constraint, penalty growth is much more moderate.

To look deeper into this issue, we also display the normalized IQR (i.e., interquartile range divided by mean) and Coefficient of Variation (CV) of penalty values in Figure~\ref{fig:sub2}. It can be observed that, compared to solving meeting scheduling problems, penalty values in GDBA have a much lower variability when solving random DCOPs. Such indiscriminate penalization cause each constraint to receive heavy but similar attention, which could offset the benefit of breakout.

The unbounded penalty accumulation also contributes to the ineffective penalization. Specifically, GDBA monotonically increases cost modifiers without any decay mechanism, even when the constraints are almost perfectly satisfied, e.g., constraint costs that are only slightly above the minimum costs, which in turn exacerbates the pathologies of over-penalization and indiscriminate penalization, as evidenced by Figure~\ref{fig:sub1} and \ref{fig:sub2}, respectively. 

Finally, agents in QLM independently increase the cost modifiers, which may cause agents to optimize different objectives. Consider a constraint $f_{ij}$ being flagged as violated for the first time. If agent $i$ is in QLM while agent $j$ doesn't, then the cost modifier for $f_{ij}$ becomes 1 from agent $i$’s side but remains 0 from agent $j$’s side after breakout. Such mismatch introduces asymmetry in cost structures and potentially breaks the correspondence between pure strategy Nash Equilibrium and local optimum \cite{chapman2011unifying,grinshpoun2013asymmetric}.
\subsection{DGLS Framework} \label{sec:dgls}
In light of this, we present a novel Distributed Guided Local Search (DGLS) framework for DCOPs. To overcome the over-penalization and indiscriminate penalization, our DGLS incorporates an adaptive rule for selectively penalizing the constraints with high costs. Besides, we introduce an evaporation mechanism to avoid unbounded penalty accumulation. Finally, we propose a synchronization scheme to enforce coordinated penalty update. Algorithm \ref{alg:dgls} presents the sketch of DGLS.
\begin{algorithm}[t]
\caption{Distributed Guided Local Search for agent $i$}
\label{alg:dgls}
\begin{algorithmic}[1]
\small
\State Initialize cost modifiers to 0
\State Choose a random assignment $d_i\in D_i$
\State Send $x_i=d_i$ to all neighbors $\mathcal{N}_i$
\While{termination condition is not met}
\State $\bar{P}_i\gets \emptyset$
\State Receive assignment $x_j=d_j$ from all neighbors $\mathcal{N}_i$
\State $d_i^*\gets\arg\min_{d\in D_i}\sum_j\textsc{EffCost}(d,j,d_j)$
\State $\Delta_i\gets\sum_j\textsc{EffCost}(d_i,j,d_j)-\textsc{EffCost}(d_i^*,j,d_j)$
\State Send gain $\Delta_i$ to all neighbors $\mathcal{N}_i$
\State Receive gain $\Delta_j$ from all neighbors $\mathcal{N}_i$
\If{$\Delta_i>0$}
\If{$\Delta_i$ is the best improvement}
\State $d_i\gets d_i^*$
\EndIf
\ElsIf{no neighbor can improve}
\For{$j\in \mathcal{N}_i$}
\If{\textsc{IsViolated}$(d_i,j,d_j)$}
\State $\bar{P}_i\gets \bar{P}_i\cup\{j\}$
\State Send a (SYNC, $i$) message to agent $j$
\EndIf
\EndFor
\EndIf 
\State $\tilde{P}_i\gets$ receive (SYNC, $j$) from neighbors $\mathcal{N}_i$
\For{$j\in\mathcal{N}_i$}
\State\textsc{Evaporate}$(j)$
\State\textsc{IncreaseMod}$(d_i,j,d_j,\bar{P}_i,\tilde{P}_i)$
\EndFor
\State Send $x_i=d_i$ to all neighbors $\mathcal{N}_i$
\EndWhile 
\end{algorithmic}
\end{algorithm}

Like GDBA, each agent $i$ in our DGLS begins with initializing the cost modifiers and broadcasting a randomly selected assignment $d_i$ to its neighbors. After that, it looks for the best assignment $d_i^*$ and calculates the corresponding gain $\Delta_i$ based on the current local view and cost modifiers via \textsc{EffCost} (cf.~Algorithm~\ref{alg:effcost}), where the base cost is modified in either additive ($A$) or multiplicative ($M$) way. After that, it broadcasts the gain to the neighbors to determine whether it should make a local move. Particularly, if $\Delta_i=0$ and no neighbor can improve, then a QLM is detected and agent $i$ starts to select constraints to penalize. After that, all cost modifiers are evaporated by a rate $\gamma$, and finally agent $i$ performs coordinated penalty update given a scope (i.e., cell, table, row or column). Therefore, a DGLS is instantiated by specifying a tuple $(A/M,\gamma,cel/tab/row/col)$.
\begin{algorithm}[t]
\caption{Effective Cost Computation}
\label{alg:effcost}
\begin{algorithmic}[1]
\Function{EffCost}{$d_i,j,d_j$}
\If{$manner=additive$}
\State\textbf{return} $f_{ij}(d_i,d_j)+M_{ij}(d_i,d_j)$
\ElsIf{$manner=multiplicative$}
\State\textbf{return} $f_{ij}(d_i,d_j)\cdot\left[M_{ij}(d_i,d_j) + 1\right]$
\EndIf
  \EndFunction
\end{algorithmic}
\end{algorithm}
\subsubsection{Adaptive violation condition} Instead of using fixed rules in GDBA, our DGLS incorporates an adaptive rule for identifying violated constraints (cf.~Algorithm~\ref{alg:adaptive}). Intuitively, for each constraint $f_{ij}$ we first compute its normalized cost $\eta$, where $\check{f}_{ij}=\min_{d_i,d_j}f_{ij}(d_i,d_j)$ and $\hat{f}_{ij}=\max_{d_i,d_j}f_{ij}(d_i,d_j)$. Then we stochastically mark the constraint as violated with a probability of $\eta$. This also generalizes the utility-based penalization in classical GLS \cite{voudouris2010guided}, where the features with the highest score are deterministically penalized.

Our adaptive rule has several unique advantages. First, it eliminates the need to tune the constraint violation condition in GDBA (e.g., $NZ,\;NM$ and $MX$). Second, it perfectly aligns with objective (\ref{eq:dcop_obj}) by measuring the ``badness" of a constraint with the normalized cost. Particularly, if the constraint cost equals the minimum value, then the constraint cannot be flagged as violated; conversely, if the constraint cost attains the maximum value, then there must be a constraint violation. By selectively penalizing constraints, we avoid over-penalizing the constraints with the cost close to their minimum value, which directs local search to pay more attention to the constraints that incur high costs.
\begin{algorithm}[t]
\caption{Adaptive Violation Condition}
\label{alg:adaptive}
\begin{algorithmic}[1]
\Function{IsViolated}{$d_i,j,d_j$}
    \State $\eta\gets \frac{f_{ij}(d_i,d_j)-\check{f}_{ij}}{\hat{f}_{ij}-\check{f}_{ij}}$
    \If{\textsc{Random}(0, 1)$<\eta$}
    \State \textbf{return} True
    \EndIf
    \State \textbf{return} False
  \EndFunction
\end{algorithmic}
\end{algorithm}
\subsubsection{Evaporation mechanism} We periodically decay the cost modifiers through an evaporation mechanism (cf.~\textsc{Evaporate}). For each constraint $f_{ij}$, the associated cost modifier $M_{ij}$ is geometrically decayed by $0<\gamma< 1$. Formally,
\begin{equation}
    M_{ij}(d_i,d_j)\gets \gamma M_{ij}(d_i,d_j),\quad\forall d_i\in D_i,d_j\in D_j.
\end{equation}
The evaporation mechanism addresses the issue of unbounded penalty accumulation. Combining with our adaptive violation condition, it helps local search to effectively forget the penalty of the well-satisfied constraints (e.g., the constraints whose current cost is close to the minimum value), and thus realizes selective penalization.

\subsubsection{Coordinated penalty update} We coordinate the penalty update between two agents by explicitly communicating the index of constraints to be penalized. Specifically, for each round, agent $i$ maintains a set of self-penalized constraints $\bar{P}_i$ and a set of constraints penalized by neighbors $\tilde{P}_i$. When it decides to penalize a constraint $f_{ij}$ (i.e., \textsc{IsViolated}$(d_i,j,d_j)$=True), agent $i$ first records the index to $\bar{P}_i$ and notify neighbor $j$ through a SYNC message. After that, it collects the index associated with all SYNC messages from neighbors as $\tilde{P}_i$. Finally, it performs coordinated penalty update according to Algorithm~\ref{alg:coordinated}.

Specifically, if the update scope is either cell ($cel$) or table ($tab$), then the corresponding entries of the cost modifier will be increased by 1 if the index presents in either $\bar{P}_i$ or $\tilde{P}_i$. On the other hand, if the update scope is row, agent $i$ will increase the entries in the $d_i$-th row regardless of $x_j$'s assignment if the penalization of $f_{ij}$ is initiated by agent $i$. If $j\in \tilde{P}_i$, agent $i$ mirrors the operation of agent $j$ by penalizing the $d_j$-th column of the cost modifier. Finally, if both agent $i$ and $j$ penalize constraint $f_{ij}$, then we minus 1 from $M_{ij}(d_i,d_j)$ to avoid double-counting. The scope of column ($col$) follows a similar pattern by exchanging row and column operations.
\begin{algorithm}[t]
\caption{Coordinated Penalty Update}
\label{alg:coordinated}
\begin{algorithmic}[1]
\Function{IncreaseMod}{$d_i,j,d_j,\bar{P}_i,\tilde{P}_i$}
\If{$scope=cell$}
\If{$j\in \bar{P}_i\lor j\in \tilde{P}_i$}
\State $M_{ij}(d_i,d_j)\gets M_{ij}(d_i,d_j)+1$
\EndIf
\ElsIf{$scope=table$}
\If{$j\in \bar{P}_i\lor j\in \tilde{P}_i$}
\State $M_{ij}(d_i^\prime,d_j^\prime)\gets M_{ij}(d_i^\prime,d_j^\prime)+1,\;\forall d_i^\prime,d_j^\prime$
\EndIf
\ElsIf{$scope=row$}
\If{$j\in \bar{P}_i$}
\State $M_{ij}(d_i,d_j^\prime)\gets M_{ij}(d_i,d_j^\prime)+1,\;\forall d_j^\prime$
\EndIf
\If{$j\in \tilde{P}_i$}
\State $M_{ij}(d_i^\prime,d_j)\gets M_{ij}(d_i^\prime,d_j)+1,\;\forall d_i^\prime$
\EndIf
\If{$j\in \bar{P}_i\land j\in \tilde{P}_i$}
\State $M_{ij}(d_i,d_j)\gets M_{ij}(d_i,d_j)-1$
\EndIf
\ElsIf{$scope=column$}
\If{$j\in \bar{P}_i$}
\State $M_{ij}(d_i^\prime,d_j)\gets M_{ij}(d_i^\prime,d_j)+1,\;\forall d_i^\prime$
\EndIf
\If{$j\in \tilde{P}_i$}
\State $M_{ij}(d_i,d_j^\prime)\gets M_{ij}(d_i,d_j^\prime)+1,\;\forall d_j^\prime$
\EndIf
\If{$j\in \bar{P}_i\land j\in \tilde{P}_i$}
\State $M_{ij}(d_i,d_j)\gets M_{ij}(d_i,d_j)-1$
\EndIf
\EndIf
\EndFunction
\end{algorithmic}
\end{algorithm}
\subsection{Theoretical Results}\label{sec:theoretical}
\newtheorem{lemma}{Lemma}
\newtheorem{theorem}{Theorem}
\newtheorem{corollary}{Corollary}
In this subsection, we theoretically analyze the properties of DGLS. We first show the upper bound of values in cost modifiers in our DGLS, which avoids the uncontrollably growing penalty values in GDBA.
\begin{theorem}
    With evaporation rate $0<\gamma<1$, the penalty values in any cost modifier are bounded by $1/(1-\gamma)$.
\end{theorem}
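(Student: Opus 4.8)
The plan is to track a single entry $M_{ij}(d_i,d_j)$ of an arbitrary cost modifier across the rounds of Algorithm~\ref{alg:dgls} and show it obeys a linear recurrence of the form $M^{(t+1)} \le \gamma M^{(t)} + 1$ with $M^{(0)}=0$, from which the geometric-series bound follows immediately. Note that a given cost modifier $M_{ij}$ is associated with exactly one edge, so across the loop ``\textbf{for} $j\in\mathcal{N}_i$'' its entries are modified only in the single iteration corresponding to that neighbour $j$.

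First I would isolate what happens to a fixed entry in one iteration of the main loop. The only lines that touch a cost modifier are \textsc{Evaporate}$(j)$ and \textsc{IncreaseMod}$(\cdots)$, executed in that order, so in round $t$ the entry is first scaled to $\gamma M^{(t)}$ and then possibly increased. The key claim is that a single call to \textsc{IncreaseMod} adds at most $1$ to any one entry. I would check this by a short case analysis over the four scopes: for \emph{cell} and \emph{table} the guard is the disjunction $j\in\bar P_i \lor j\in\tilde P_i$, so an affected entry receives $+1$ exactly once; for \emph{row}, an entry can be hit by the ``$j\in\bar P_i$'' branch (the $d_i$-th row), by the ``$j\in\tilde P_i$'' branch (the $d_j$-th column), or by both, and it is hit by both precisely when it is the cell $(d_i,d_j)$, in which case the final correction subtracts $1$, giving net increment $2-1=1$; the \emph{column} case is symmetric. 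Hence in all cases $M^{(t+1)} \le \gamma M^{(t)} + 1$.

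Then I would close by induction on $t$. With $M^{(0)}=0$, assuming $M^{(t)} \le \sum_{k=0}^{t-1}\gamma^k = \frac{1-\gamma^t}{1-\gamma}$, the recurrence gives $M^{(t+1)} \le \gamma\cdot\frac{1-\gamma^t}{1-\gamma} + 1 = \frac{1-\gamma^{t+1}}{1-\gamma}$, completing the step. Since $0<\gamma<1$ we have $\gamma^{t}>0$ for every $t$, so $M^{(t)} \le \frac{1-\gamma^t}{1-\gamma} < \frac{1}{1-\gamma}$ uniformly over all entries, all cost modifiers, and all rounds, which is the claimed bound.

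I expect the only real subtlety to be the bookkeeping in the \emph{row}/\emph{column} scopes: one must verify that the $-1$ correction exactly cancels the double count on the unique overlap cell and that no other entry is incremented twice; the \emph{cell}/\emph{table} cases and the induction are immediate. It is also worth noting, via the same case analysis (the $-1$ is always preceded by two $+1$'s), that every entry remains non-negative, so the evaporation step never increases an entry and the bound is tight up to the $\gamma^t$ term; this observation is not needed for the upper bound itself.
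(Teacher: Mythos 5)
Your proof is correct and follows essentially the same route as the paper's: the per-round recurrence $M^{(t+1)} \le \gamma M^{(t)} + 1$ with $M^{(0)}=0$ yields the geometric series bounded by $1/(1-\gamma)$. The one thing you add beyond the paper's argument is the explicit case check that \textsc{IncreaseMod} nets at most $+1$ on any entry under every scope (including the $-1$ correction in the row/column cases), which the paper's ``worst-case scenario'' phrasing takes for granted; this is a welcome tightening but not a different approach.
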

\begin{proof}
    Consider the worst-case scenario where a specific entry $(d_i,d_j)$ is incremented in every round. Let $M^{(k)}_{ij}$ be the penalty value in round $k$. We have
    \[
    \begin{aligned}
        M^{(1)}_{ij}(d_i,d_j)&=0\cdot\gamma +1=1\\
        M^{(2)}_{ij}(d_i,d_j)&=1\cdot\gamma +1=1+\gamma\\
        % M^{(3)}_{ij}(d_i,d_j)&=(1+\gamma)\cdot\gamma +1=1+\gamma+\gamma^2\\
        \cdots&\cdots\\
        M^{(k)}_{ij}(d_i,d_j)&=1+\gamma+\gamma^2+\cdots+\gamma^{k-1}.
    \end{aligned}
    \]
    As $k\to\infty$, this geometric series converges to $1/(1-\gamma)$, which concludes the theorem.
\end{proof}
\begin{corollary}
    The effective cost is bounded for both additive and multiplicative cases:
    \[
    \begin{aligned}
        \textnormal{\textsc{EffCost}}_A(d_i,j,d_j)&\leq\hat{f}_{ij}+1/(1-\gamma)\\
        \textnormal{\textsc{EffCost}}_M(d_i,j,d_j)&\leq\hat{f}_{ij}\cdot\left[1+1/(1-\gamma)\right].
    \end{aligned}
    \]
\end{corollary}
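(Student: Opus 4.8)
The plan is to obtain both inequalities as a one-line consequence of Theorem~1 combined with the definition of \textsc{EffCost} in Algorithm~\ref{alg:effcost}. The facts I would invoke are: (i) every entry of every cost modifier satisfies $M_{ij}(d_i,d_j)\le 1/(1-\gamma)$, which is exactly the statement of Theorem~1; (ii) $f_{ij}(d_i,d_j)\le\hat{f}_{ij}$, which is immediate from $\hat{f}_{ij}=\max_{d_i,d_j}f_{ij}(d_i,d_j)$; and (iii) non-negativity, namely $f_{ij}(d_i,d_j)\ge 0$ since $f_{ij}$ maps into $\mathbb{R}_{\ge 0}$, and $M_{ij}(d_i,d_j)\ge 0$ since modifiers are initialized to $0$ and are only ever scaled by $\gamma>0$ or incremented by $1$ (the single $-1$ correction in \textsc{IncreaseMod} only cancels a simultaneous $+1$, so it never drives an entry negative).

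For the additive case I would substitute directly into $\textsc{EffCost}_A(d_i,j,d_j)=f_{ij}(d_i,d_j)+M_{ij}(d_i,d_j)$, bounding the first summand by (ii) and the second by (i), which yields $\textsc{EffCost}_A(d_i,j,d_j)\le\hat{f}_{ij}+1/(1-\gamma)$. For the multiplicative case, $\textsc{EffCost}_M(d_i,j,d_j)=f_{ij}(d_i,d_j)\cdot[M_{ij}(d_i,d_j)+1]$; both factors are non-negative by (iii), so replacing each by its upper bound can only increase the product, giving $\textsc{EffCost}_M(d_i,j,d_j)\le\hat{f}_{ij}\cdot[1+1/(1-\gamma)]$, as claimed.

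There is no genuinely hard step here: the corollary is a direct substitution into Theorem~1. The only point requiring a moment's care is the multiplicative case, where multiplying the two upper bounds is legitimate only because both $f_{ij}(d_i,d_j)$ and $M_{ij}(d_i,d_j)+1$ are non-negative; this is precisely why I would spell out fact (iii) explicitly, appealing to the DCOP definition for $f_{ij}\ge 0$ and to the initialization and update rules of the cost modifiers for $M_{ij}\ge 0$.
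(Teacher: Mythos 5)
Your proof is correct and is exactly the argument the paper intends: the corollary is stated without proof as an immediate consequence of Theorem~1, and your direct substitution of the bounds $f_{ij}\le\hat{f}_{ij}$ and $M_{ij}\le 1/(1-\gamma)$ into the two branches of \textsc{EffCost} is the only reasoning needed. Your explicit note on non-negativity of both factors in the multiplicative case (including why the $-1$ correction in \textsc{IncreaseMod} cannot make a modifier negative) is a careful touch the paper leaves implicit.
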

This corollary indicates that the contribution of any constraint to the total effective cost is bounded by a constant, which prevents any constraint from dominating the augmented objective as the number of rounds grows.

We then show that our coordinated penalty update guarantees the consistency of cost modifiers from both sides of each constraint. This consistency, in turn, enables a potential game structure in DGLS.
\begin{lemma}
    At the beginning of each round, the cost modifier of the constraint between agent $i$ and agent $j$ from $i$'s side is the same as the counterpart from $j$'s side.\footnote{Proof is provided in the extended version.}
\end{lemma}

\begin{theorem}
For each round, agents in DGLS play a potential game where the potential function is the total effective cost given the current cost modifiers.
\end{theorem}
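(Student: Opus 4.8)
The plan is to exhibit an exact potential function for the one-shot game that each agent faces within a fixed round, and then verify the defining identity directly. Fix a round and let $M_{ij}$ denote the cost modifiers \emph{at the beginning of that round}; these stay fixed throughout the move-selection phase (lines 7--14 of Algorithm~\ref{alg:dgls}), and the only updates, \textsc{Evaporate} and \textsc{IncreaseMod}, happen afterwards. By the Lemma, $M_{ij}=M_{ji}^T$ for every constraint $f_{ij}\in F$, and since base costs satisfy $f_{ij}(d_i,d_j)=f_{ji}(d_j,d_i)$, the effective cost is endpoint-independent: for both the additive manner ($f_{ij}+M_{ij}$) and the multiplicative manner ($f_{ij}\cdot[1+M_{ij}]$), the value $\textsc{EffCost}(d_i,j,d_j)$ computed by agent $i$ equals $\textsc{EffCost}(d_j,i,d_i)$ computed by agent $j$. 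Denote this common value $h_{ij}(d_i,d_j)$.

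Next I would take the candidate potential to be the total effective cost, $\Phi(\tau)=\sum_{f_{ij}\in F} h_{ij}(d_i,d_j)$, and the cost of agent $i$ to be $c_i(\tau)=\sum_{j\in\mathcal{N}_i} h_{ij}(d_i,d_j)$, which is exactly the quantity minimized in line 7 and differenced in line 8 of Algorithm~\ref{alg:dgls}. The key step is the exact-potential identity: for any joint assignment $\tau=(d_i,\tau_{-i})$ and any unilateral deviation $d_i'\in D_i$,
\[
\Phi(d_i',\tau_{-i})-\Phi(d_i,\tau_{-i}) = c_i(d_i',\tau_{-i})-c_i(d_i,\tau_{-i}).
\]
This holds because, in the sum defining $\Phi$, every term $h_{k\ell}$ with $i\notin\{k,\ell\}$ is untouched by the deviation and cancels, leaving precisely the terms on constraints incident to $i$, whose sum is $c_i$. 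Hence $\langle I,\{D_i\},\{c_i\}\rangle$ is an exact potential game with potential $\Phi$ equal to the total effective cost under the round's modifiers, which is the statement. (The ``for each round'' qualifier is essential: $\Phi$ changes across rounds as the modifiers evolve, but within a round it is a fixed function.)

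The main obstacle, and the reason the Lemma must be invoked first, is precisely well-definedness: one must guarantee that each constraint contributes a \emph{single} effective cost rather than two possibly different numbers as perceived by its two endpoints. Under GDBA's uncoordinated updates this symmetry fails, the would-be potential is ill-defined, and the cancellation argument collapses; DGLS's coordinated penalty update together with the shared evaporation rate $\gamma$ is exactly what restores $M_{ij}=M_{ji}^T$. Once symmetry is in hand, the remainder is the standard reduction of pairwise-separable cost games to exact potential games (cf.~\citep{chapman2011unifying}), and the only genuinely new checking is the one-line verification that $h_{ij}$ is symmetric in the multiplicative manner, which is immediate from $f_{ij}=f_{ji}^T$ and $M_{ij}=M_{ji}^T$.
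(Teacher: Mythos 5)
Your proposal is correct and follows essentially the same route as the paper's proof: both invoke the symmetry lemma to make the per-constraint effective cost well-defined, take the total effective cost as the potential, and verify the exact-potential identity by cancelling all terms not incident to the deviating agent. The only cosmetic difference is that you sum once over constraints while the paper sums over ordered agent--neighbor pairs with a factor of $\tfrac{1}{2}$, which are identical once symmetry is established.
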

\begin{proof}
    Define the potential function as
    \begin{equation}
        \Phi(\tau)=\frac{1}{2}\sum\nolimits_{i\in I}\sum\nolimits_{j\in\mathcal{N}_i}\textsc{EffCost}(d_i,j,d_j),\label{eq:potential_function}
    \end{equation}
    where $\tau=(d_1,\dots,d_{|X|})$ is a solution. Consider an agent $i$ unilaterally changing its assignment from $d_i$ to $d_i^\prime$, while all other agents maintain their current assignments. The change in agent $i$'s local cost is:
    \[
        \Delta_i=\sum\nolimits_{j\in\mathcal{N}_i}\textsc{EffCost}(d_i,j,d_j)-\textsc{EffCost}(d_i^\prime,j,d_j).
    \]
    The change in the potential function is:
    \[
        \begin{aligned}
        \Delta\Phi=&\frac{1}{2}\sum_{j\in\mathcal{N}_i}\textsc{EffCost}(d_i,j,d_j)-\textsc{EffCost}(d_i^\prime,j,d_j)\\
        &+\frac{1}{2}\sum_{j\in\mathcal{N}_i}\textsc{EffCost}(d_j,i,d_i)-\textsc{EffCost}(d_j,i,d_i^\prime).
        \end{aligned}
    \]
    By Lemma 1, the cost modifiers are the same from both agent $i$'s and $j$'s side, which implies
    \[
        \textsc{EffCost}(d_i,j,d_j)=\textsc{EffCost}(d_j,i,d_i),
    \]
    and 
    \[
        \textsc{EffCost}(d_i^\prime,j,d_j)=\textsc{EffCost}(d_j,i,d_i^\prime).
    \]
    Therefore, $\Delta_i=\Delta\Phi$, which concludes the theorem.
\end{proof}
Theorem 2 indicates that unlike in GDBA, agents in our DGLS optimize a consistent and well-defined global objective, i.e., the total effective cost (cf.~Eq.~(\ref{eq:gls_obj}) and Eq.~(\ref{eq:mul-obj})). Furthermore, any local improvement of an agent exactly corresponds to a reduction in the global effective cost.

We now establish some equivalences of DGLS variants.
\begin{theorem}
    If $f_{ij}:D_i\times D_j\to\{0,1\},\;\forall f_{ij}\in F$, then $(A,\gamma,cel)$ and $(M,\gamma,cel)$ are equivalent for any $\gamma$.
\end{theorem}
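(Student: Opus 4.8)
The plan is to exhibit a single invariant that, for $\{0,1\}$-valued constraints under the cell scope, forces the additive and multiplicative \textsc{EffCost} functions to agree on every state the protocol can reach, and then to upgrade this pointwise agreement to a full trajectory-level equivalence by a round-by-round coupling of the two executions. The first observation I would record is that the manner $A/M$ enters Algorithm~\ref{alg:dgls} \emph{only} through \textsc{EffCost} (Algorithm~\ref{alg:effcost}): the violation test \textsc{IsViolated}, the SYNC exchange, \textsc{Evaporate}, and \textsc{IncreaseMod} reference only $f_{ij}$, $\check{f}_{ij}$, $\hat{f}_{ij}$, the modifiers, the current assignment, and a random draw --- never the manner. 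Hence it suffices to prove $\textsc{EffCost}_A = \textsc{EffCost}_M$ on all reachable states.

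Next I would establish the invariant: under the cell scope, every entry $(a,b)$ with $f_{ij}(a,b)=0$ satisfies $M_{ij}(a,b)=0$ at the start of every round (and hence at every point \textsc{EffCost} is evaluated). This is a routine induction on the round index. Initially all modifiers are $0$; \textsc{Evaporate} multiplies by $\gamma$ and so keeps zero entries at zero; and under the cell scope \textsc{IncreaseMod} increments only the single entry $M_{ij}(d_i,d_j)$ of the current assignment, and only when \textsc{IsViolated}$(d_i,j,d_j)$ is True. But for a non-constant binary constraint $\check{f}_{ij}=0$ and $\hat{f}_{ij}=1$, so the normalized cost is $\eta=f_{ij}(d_i,d_j)$, which is $0$ whenever $f_{ij}(d_i,d_j)=0$; there \textsc{IsViolated} is False (the draw \textsc{Random}$(0,1)$ is never below $0$), so no zero-cost entry is ever incremented. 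Given the invariant, the algebraic identity is immediate: if $f_{ij}(d_i,d_j)=0$ then both $\textsc{EffCost}_A$ and $\textsc{EffCost}_M$ equal $0$, and if $f_{ij}(d_i,d_j)=1$ then both equal $1+M_{ij}(d_i,d_j)$. The constant-constraint corner cases need no invariant: $f_{ij}\equiv 0$ makes both effective costs identically $0$, while $f_{ij}\equiv 1$ gives $1+M_{ij}(d_i,d_j)=1\cdot[M_{ij}(d_i,d_j)+1]$ irrespective of whatever convention is adopted for the indeterminate $\eta$.

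Finally I would couple the two runs: fix a common realization of all internal randomness --- the random initial assignment, any tie-breaking in the $\arg\min$ of line~7 and in the best-improvement test, and the draws inside \textsc{IsViolated} --- and induct on the round. If at the start of round $k$ the two variants hold identical assignments and identical cost modifiers, then the agreement of \textsc{EffCost} makes $d_i^*$ and $\Delta_i$ identical, hence the move decisions identical; \textsc{IsViolated}, the SYNC messages, \textsc{Evaporate}, and \textsc{IncreaseMod} are manner-independent, so $\bar{P}_i$, $\tilde{P}_i$, and all modifier updates agree; thus the state at the start of round $k+1$ is again identical. Hence the two executions are pointwise equal, which is the asserted equivalence.

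I expect the only real obstacle to be the bookkeeping in the invariant step --- verifying that under the cell scope the sole entry of $M_{ij}$ that can change in a round is the one indexed by the current assignment, and that the violation test excludes exactly the minimum-cost (here, zero) entries. This is precisely where both hypotheses are used: the $\{0,1\}$ range (so the minimum cost of a constraint is $0$, whence a cost equal to the minimum coincides with a cost equal to $0$) and the cell scope (a row, column, or table update would spill penalty onto zero-cost entries and destroy the identity $v+m=v(m+1)$ for $v\in\{0,1\}$ on which everything rests). Everything downstream of $\textsc{EffCost}_A = \textsc{EffCost}_M$ is mechanical.
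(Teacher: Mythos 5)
Your proposal is correct and follows essentially the same route as the paper's proof: split on whether $f_{ij}(d_i,d_j)$ is $0$ or $1$, show by induction that zero-cost cells never acquire penalty (since $\eta=0$ blocks violation) while one-cost cells accumulate identical penalties in both variants, and conclude $\textsc{EffCost}_A=\textsc{EffCost}_M$ from $v+m=v(m+1)$ for $v\in\{0,1\}$. You are somewhat more careful than the paper in two places --- making the round-by-round coupling of shared randomness explicit (the paper leaves the ``same trajectory'' step implicit) and handling the constant-constraint case where $\eta$ is formally $0/0$ --- but these are refinements of the same argument, not a different one.
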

\begin{proof}
    We prove the theorem by showing that for each round $k$ and assignments $d_i\in D_i,d_j\in D_j$,
    \begin{equation}
        \textsc{EffCost}_A^{(k)}(d_i,j,d_j)=\textsc{EffCost}_M^{(k)}(d_i,j,d_j),\label{eq:eff_eq}
    \end{equation}
    where $\textsc{EffCost}_A^{(k)}$ and $\textsc{EffCost}_M^{(k)}$ are the effective cost for round $k$ under additive and multiplicative manner, respectively. If $f_{ij}(d_i,d_j)=0$, then the constraint $f_{ij}$ cannot be flagged as violated since $\eta=0$ (cf.~Algorithm~\ref{alg:adaptive}). Therefore the corresponding entry $(d_i,d_j)$ of the cost modifier $M_{ij}$ in both variants remains 0 since no increment happens. In this case, Eq.~(\ref{eq:eff_eq}) holds because
    \[
        \begin{aligned}
        &\textsc{EffCost}_A^{(k)}(d_i,j,d_j)=0+M_{ij}(d_i,d_j)=0+0=0\\
        &\textsc{EffCost}_M^{(k)}(d_i,j,d_j)=0\cdot\left[1+M_{ij}(d_i,d_j)\right]=0\cdot 1=0.
        \end{aligned}
    \]
    
    On the other hand, if $f_{ij}(d_i,d_j)=1$, then the two variants maintain the same penalty for entry $(d_i,d_j)$ given the same decay factor $\gamma$. We show it by induction. The fact holds for the first round where the cost modifiers are 0. Assume that it holds for round $k$. If there is a QLM and $d_i,d_j$ are the incumbent assignments, then $f_{ij}$ must be violated and the penalty of entry $(d_i,d_j)$ will be incremented by 1 in both variants since $\eta=1$. Otherwise, there is no increment for both variants. Given the same evaporation rate $\gamma$, the fact holds for round $k+1$. Therefore, Eq.~(\ref{eq:eff_eq}) holds for this case because
    \[
        \resizebox{.99\linewidth}{!}{$
        \begin{aligned}
        &\textsc{EffCost}_A^{(k)}(d_i,j,d_j)=1+M_{ij}(d_i,d_j)\\
        &\textsc{EffCost}_M^{(k)}(d_i,j,d_j)=1\cdot\left[1+M_{ij}(d_i,d_j)\right]=1+M_{ij}(d_i,d_j).
        \end{aligned}
        $}
    \]
    
\end{proof}
\begin{theorem}
$(A,\gamma,tab)$ is equivalent to Maximum Gain Message (MGM) for any $\gamma$.    
\end{theorem}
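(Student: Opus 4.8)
The plan is to show that, under table scope with the additive manner, every cost modifier stays a \emph{uniform} matrix (all entries equal) throughout the run, and that adding a constant to all entries of a constraint affects neither the best-response assignment nor the reported gain of any agent. Hence the move decisions and QLM detections of $(A,\gamma,tab)$ coincide with those of MGM, and the two algorithms produce the same sequence of assignments when started from the same random assignment under a common tie-breaking rule.

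First I would establish the invariant by induction on the round index $k$: at the start of round $k$, for every constraint $f_{ij}$ there is a scalar $c^{(k)}_{ij}\ge 0$ with $M^{(k)}_{ij}(d_i,d_j)=c^{(k)}_{ij}$ for all $d_i\in D_i,\,d_j\in D_j$. The base case is immediate since all modifiers are initialized to $0$. For the inductive step, \textsc{Evaporate} multiplies a uniform matrix by $\gamma$, which stays uniform, and under the table scope \textsc{IncreaseMod} either leaves the modifier untouched or adds $1$ to \emph{every} entry (line 7--8 of Algorithm~\ref{alg:coordinated}); in both cases uniformity is preserved, with $c^{(k+1)}_{ij}\in\{\gamma c^{(k)}_{ij},\ \gamma c^{(k)}_{ij}+1\}$.

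Next I would use the invariant to show the penalty is invisible to the search dynamics. Under the additive manner, $\textsc{EffCost}(d,j,d_j)=f_{ij}(d,d_j)+c^{(k)}_{ij}$, so agent $i$'s best response is $d_i^*=\arg\min_{d\in D_i}\sum_{j}\bigl(f_{ij}(d,d_j)+c^{(k)}_{ij}\bigr)=\arg\min_{d\in D_i}\sum_{j}f_{ij}(d,d_j)$, because $\sum_j c^{(k)}_{ij}$ does not depend on $d$; likewise the gain is $\Delta_i=\sum_j\bigl(f_{ij}(d_i,d_j)-f_{ij}(d_i^*,d_j)\bigr)$, since the constants cancel. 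These are exactly the quantities MGM computes, so the $\Delta_i>0$ test and the best-improvement test in Algorithm~\ref{alg:dgls} replicate MGM's move rule, and when neither holds (a QLM), MGM makes no move while $(A,\gamma,tab)$ updates only the uniform modifiers; the assignment is unchanged either way.

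Finally I would close with an outer induction coupling the two executions: assuming both hold the same assignment at the start of round $k$ under a common tie-breaking order, the previous paragraph shows each agent makes the same move, so the assignments agree at round $k+1$ as well, and the penalty bookkeeping of $(A,\gamma,tab)$ never touches the assignment. Thus the trajectories coincide for every $\gamma$. I do not expect a substantive obstacle; the only points needing care are making the coupling precise (fixing the same initialization and tie-breaking so that ``equivalent'' is literally trajectory equality) and the degenerate case $\hat f_{ij}=\check f_{ij}$, where $\eta$ is taken as $0$ and the constraint is never penalized, consistent with the uniform-matrix invariant.
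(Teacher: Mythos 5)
Your proof is correct and follows essentially the same route as the paper's: both hinge on the observation that under table scope each cost modifier remains a uniform (constant) matrix, so the additive penalty cancels out of both the best-response computation and the gain, reducing the dynamics to MGM's. You simply make explicit two points the paper leaves implicit --- the induction establishing uniformity through evaporation and table-wise increments, and the trajectory coupling that turns ``same decisions'' into ``equivalent algorithms'' --- which is a welcome tightening rather than a different argument.
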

\begin{proof}
    Let's consider the decision process of agent $i$ under the additive manner:
    \[
    \begin{aligned}
        d_i^*&=\mathop{\arg\min}\nolimits_{d_i^\prime\in D_i}\sum\nolimits_{j\in\mathcal{N}_i}\textnormal{\textsc{EffCost}}(d_i^\prime,j,d_j)\\
        &=\mathop{\arg\min}\nolimits_{d_i^\prime\in D_i}\sum\nolimits_{j\in\mathcal{N}_i}f_{ij}(d_i^\prime,d_j)+M_{ij}(d_i^\prime,d_j).
    \end{aligned}
    \]
    In table scope, the cost modifiers are updated table-wise (cf.~line 5-7 of Algorithm~4). Therefore $M_{ij}(d_i^\prime,d_j)=c_{ij},\;\forall d_i^\prime \in D_i$, which gives us
    \[
    % \small
    \begin{aligned}
        d_i^*
        &=\mathop{\arg\min}\nolimits_{d_i^\prime\in D_i}\sum\nolimits_{j\in\mathcal{N}_i}f_{ij}(d_i^\prime,d_j)+c_{ij}\\
        &=\mathop{\arg\min}\nolimits_{d_i^\prime\in D_i}\left(c+\sum\nolimits_{j\in\mathcal{N}_i}f_{ij}(d_i^\prime,d_j)\right)\\
        &=\mathop{\arg\min}\nolimits_{d_i^\prime\in D_i}\sum\nolimits_{j\in\mathcal{N}_i}f_{ij}(d_i^\prime,d_j).
    \end{aligned}
    \]
    This exactly matches the decision rule for agents in MGM \cite{maheswaran2004distributed}. Besides,
    \[
    \resizebox{.99\linewidth}{!}{$
    \begin{aligned}
    \Delta_i&=\sum_{j\in\mathcal{N}_i}\textnormal{\textsc{EffCost}}(d_i,j,d_j)-\textnormal{\textsc{EffCost}}(d_i^*,j,d_j)\\
    &=\sum_{j\in\mathcal{N}_i}f_{ij}(d_i,d_j)+M_{ij}(d_i,d_j)-f_{ij}(d_i^*,d_j)-M_{ij}(d_i^*,d_j)\\
    &=\sum_{j\in\mathcal{N}_i}f_{ij}(d_i,d_j)-f_{ij}(d_i^*,d_j)+c_{ij}-c_{ij}\\
    &=\sum_{j\in\mathcal{N}_i}f_{ij}(d_i,d_j)-f_{ij}(d_i^*,d_j),
    \end{aligned}
    $}
    \]
    which also produces the same gain as in MGM. Therefore, the theorem is concluded.
\end{proof}

Finally, we show the complexity of DGLS as follows.
\begin{theorem}
    In each round of DGLS, agent $i$ communicates $O(|\mathcal{N}_i|)$ messages and performs $O\left(|\mathcal{N}_i|*|D^i_{\max}|*|D_i|\right)$ operations, where $D^i_{\max}=\arg\max_{j\in\mathcal{N}_i}|D_j|$.
\end{theorem}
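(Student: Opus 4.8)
The plan is to bound the message count and the operation count separately by a line-by-line accounting of Algorithm~\ref{alg:dgls}, charging the subroutine calls using Algorithms~\ref{alg:effcost}, \ref{alg:adaptive} and \ref{alg:coordinated}.

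For the message complexity, I would observe that one iteration of the \texttt{while} loop contains only a constant number of communication phases for agent $i$: receiving neighbors' assignments, sending its gain and receiving neighbors' gains, sending \textsc{SYNC} notifications and collecting incoming \textsc{SYNC} notifications, and broadcasting its (possibly updated) assignment at the end. Each phase touches a subset of $\mathcal{N}_i$ and therefore costs at most $|\mathcal{N}_i|$ messages; since the number of phases is a fixed constant independent of $|X|$, $|D_i|$ and the constraint graph, the per-round message complexity is $O(|\mathcal{N}_i|)$. The initial broadcast of line~3 lies outside the loop and does not affect the per-round count.

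For the operation complexity, I would walk through the loop body and show that the work is dominated by the per-neighbor manipulation of the $|D_i|\times|D_j|$ cost-modifier and constraint tables. Since a single lookup $f_{ij}(d_i,d_j)$ or $M_{ij}(d_i,d_j)$ — and hence one \textsc{EffCost} call — is $O(1)$, computing $d_i^*$ costs $O(|\mathcal{N}_i|\cdot|D_i|)$, while the gain $\Delta_i$, the best-improvement test, and the ``no neighbor can improve'' test each cost $O(|\mathcal{N}_i|)$. The dominant operations are those that sweep an entire modifier matrix: \textsc{Evaporate}$(j)$ rescales all $|D_i|\cdot|D_j|$ entries of $M_{ij}$, and \textsc{IncreaseMod} in its worst case (table scope) increments all of them; moreover \textsc{IsViolated} needs $\check{f}_{ij}$ and $\hat{f}_{ij}$, which, if not cached, is another $O(|D_i|\cdot|D_j|)$ scan. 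Summing over neighbors, $\sum_{j\in\mathcal{N}_i}|D_i|\cdot|D_j|\le|\mathcal{N}_i|\cdot|D_i|\cdot|D^i_{\max}|$ using $|D_j|\le|D^i_{\max}|$ for every $j\in\mathcal{N}_i$; this term subsumes the $O(|\mathcal{N}_i|\cdot|D_i|)$ and $O(|\mathcal{N}_i|)$ contributions and yields the claimed $O(|\mathcal{N}_i|\cdot|D^i_{\max}|\cdot|D_i|)$ bound.

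There is no real obstacle here; the only subtlety worth flagging is the origin of the $|D^i_{\max}|$ factor, which comes entirely from the per-neighbor work on the $|D_i|\times|D_j|$ tables (evaporation, table-scope penalty updates, and the min/max in the adaptive violation test), together with the elementary estimate $\sum_{j\in\mathcal{N}_i}|D_j|\le|\mathcal{N}_i|\cdot|D^i_{\max}|$. It is also worth remarking that $\check{f}_{ij}$ and $\hat{f}_{ij}$ can be precomputed once at initialization, so their cost need not even be charged to the loop, but since it already fits within the stated bound this refinement is optional.
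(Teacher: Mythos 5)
Your proof is correct and follows essentially the same route as the paper's: count the constant number of per-round communication phases (assignments, gains, SYNC) each touching at most $|\mathcal{N}_i|$ neighbors, and observe that the operation count is dominated by the per-neighbor sweeps of the $|D_i|\times|D_j|$ modifier tables in \textsc{Evaporate} and the worst-case (table-scope) \textsc{IncreaseMod}, bounded via $|D_j|\le|D^i_{\max}|$. Your extra remarks on caching $\check{f}_{ij},\hat{f}_{ij}$ are a harmless refinement the paper does not bother with.
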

\begin{proof}
    Each round agent $i$ communicates $|\mathcal{N}_i|$ assignment messages, $|\mathcal{N}_i|$ gain messages, and $q$ SYNC messages, where $q\leq |\mathcal{N}_i|$ since $i$ notifies a neighbor $j$ only when $f_{ij}$ is flagged as violated. Therefore, the total number of messages communicated by agent $i$ is in $O(|\mathcal{N}_i|)$.

    Besides, agent $i$ finds $d_i^*$ in $O(|\mathcal{N}_i|*|D_i|)$ operations, evaporates all cost modifiers in $O\left(|\mathcal{N}_i|*|D^i_{\max}|*|D_i|\right)$ operations and increments the cost modifiers in $O\left(|\mathcal{N}_i|*|D^i_{\max}|*|D_i|\right)$ operations in the worst case (e.g., $tab$ scope is used and all involved constraints are penalized). Therefore, the total number of operations is in $O\left(|\mathcal{N}_i|*|D^i_{\max}|*|D_i|\right)$.
\end{proof}
\begin{figure}[t]
    \centering
    \begin{subfigure}{0.495\linewidth}
        \centering
        \includegraphics[width=\linewidth]{./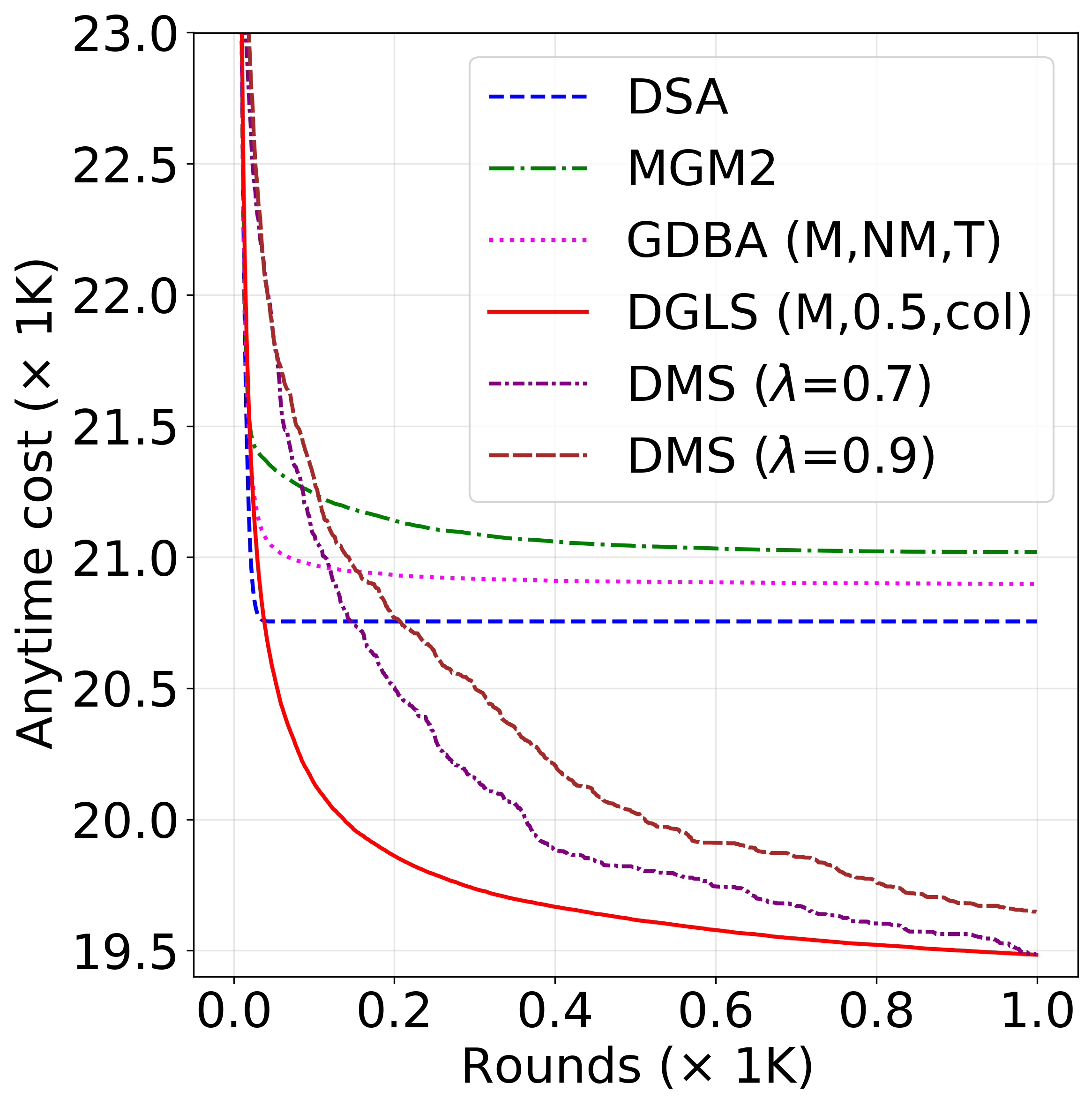}
        \caption{Sparse problems}
        \label{fig:sparse_dcop}
    \end{subfigure}
    \hfill % Add space between figures
    \begin{subfigure}{0.495\linewidth}
        \centering
        \includegraphics[width=\linewidth]{./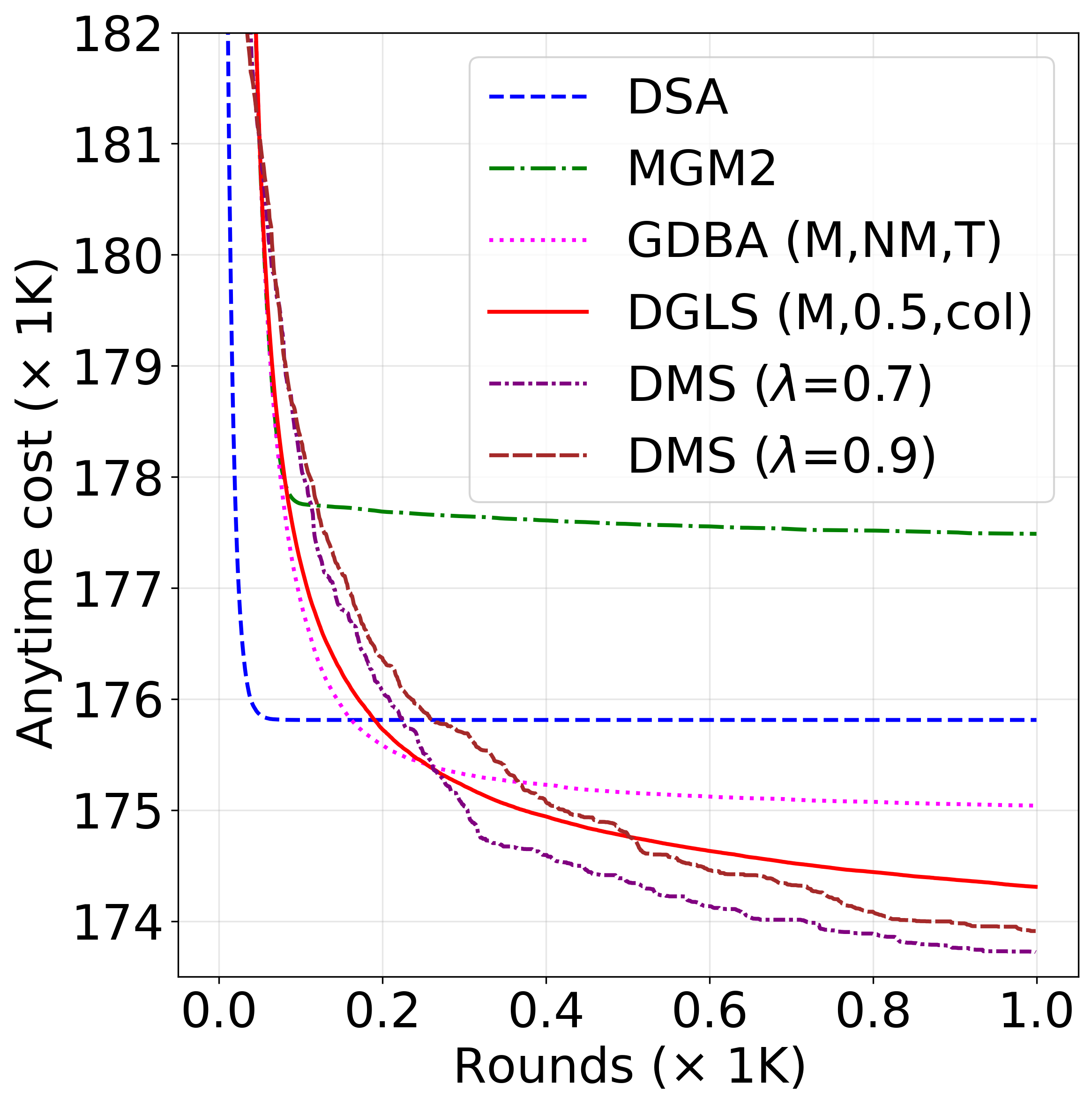}
        \caption{Dense problems}
        \label{fig:dense_dcop}
    \end{subfigure}
    \caption{Performance on random DCOPs}
    \label{fig:random_dcop}
\end{figure}
\section{Empirical Evaluations}\label{sec:exp}
\subsubsection{Benchmarks and baselines} We evaluate our algorithm on various standard DCOP benchmarks, including: (1) random DCOPs with 120 agents, a graph density of 0.1 (sparse) or 0.6 (dense); (2) scale-free networks \cite{barabasi1999emergence} with 120 agents and $m_0=m_1=3$; (3) 2D lattices with grid size of $10\times 10$; (4) meeting scheduling problems using Events-as-Variable (EAV) formulation \cite{zivan2014explorative,maheswaran2004taking} with 20 available time slots, 20 meetings and 90 persons, where each person randomly selects two meetings to participate, the travel time between any pair of meetings is uniformly drawn from $[6,10]$, and a cost equal to the number of overbooked persons is incurred if the difference between the time slots of two meetings are less than the travel time; (5) weighted graph coloring problems with 120 agents, 3 available colors, and graph density of 0.05, where a conflict cost is uniformly selected from $[1,100]$ if two adjacent agents assign the same color. For benchmarks (1--3), we consider the problems with a domain size of 10 and uniformly select constraint costs from $[0,100]$. For each benchmark, we generate 100 random instances and each instance is solved 20 times with the maximum round of 1000. Finally, we average the anytime cost \cite{zilberstein1996using,zivan2014explorative} of each round of 2000 runs as the final result. All experiments are conducted on a Linux workstation with Intel Xeon W-2133 CPU and 32GB memory.

For competitors, we consider DSA \cite{Zhang2005Distributed} with $p=0.8$ and GDBA \cite{okamoto2016distributed} as representative local search algorithms; MGM2 \cite{maheswaran2004distributed} as a representative $K$-OPT algorithm, and DMS \cite{cohen2020governing} with damping factors $\lambda=0.7$ and 0.9 as the state-of-the-art baselines. For GDBA, we consider its $(M,NM,T)$ variant that exhibits the strongest performance according to \cite{okamoto2016distributed}.
\begin{figure}[t]
    \centering
    \begin{subfigure}{0.495\linewidth}
        \centering
        \includegraphics[width=\linewidth]{./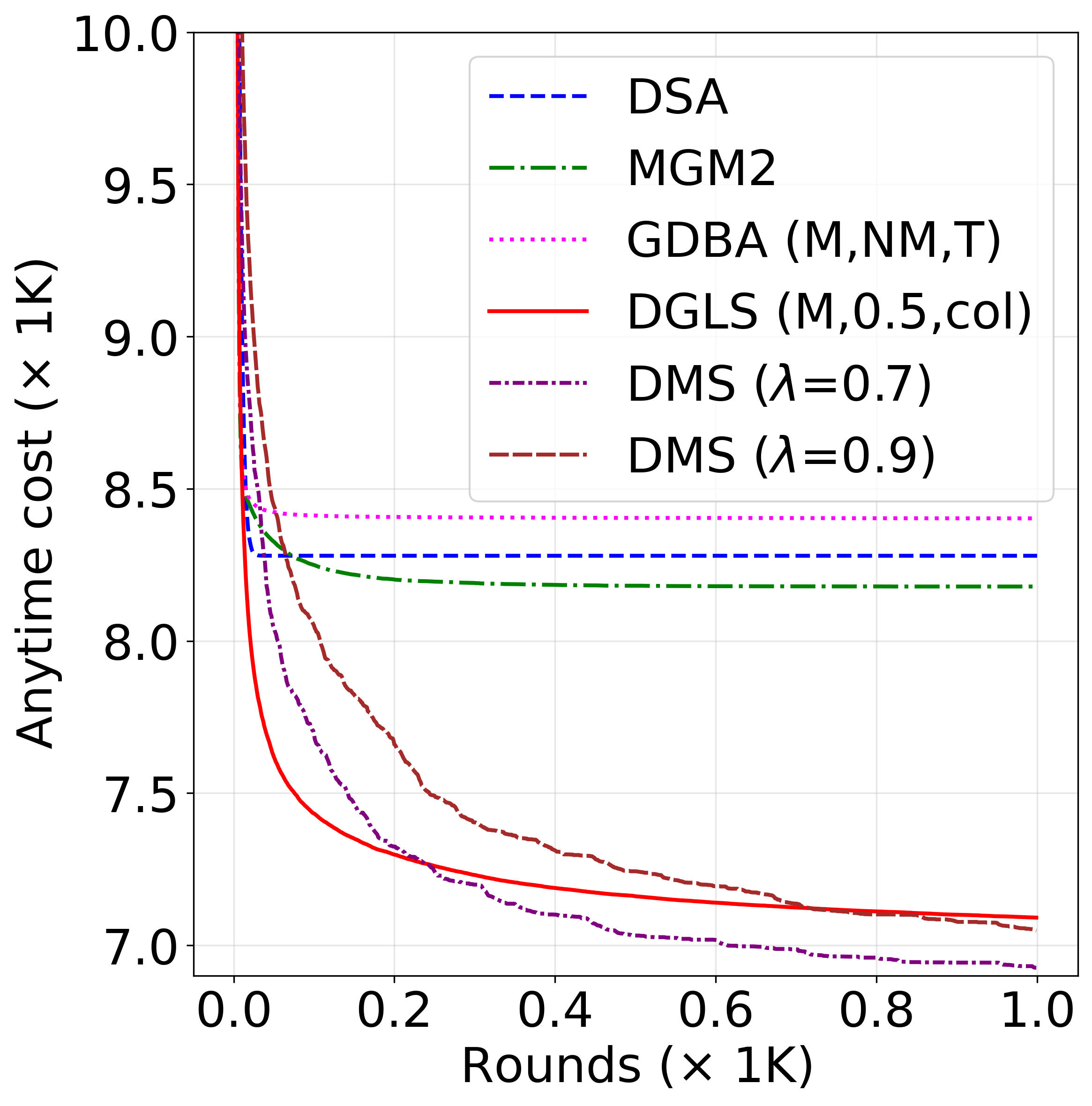}
        \caption{Scale-free networks}
        \label{fig:sf_net}
    \end{subfigure}
    \hfill % Add space between figures
    \begin{subfigure}{0.495\linewidth}
        \centering
        \includegraphics[width=\linewidth]{./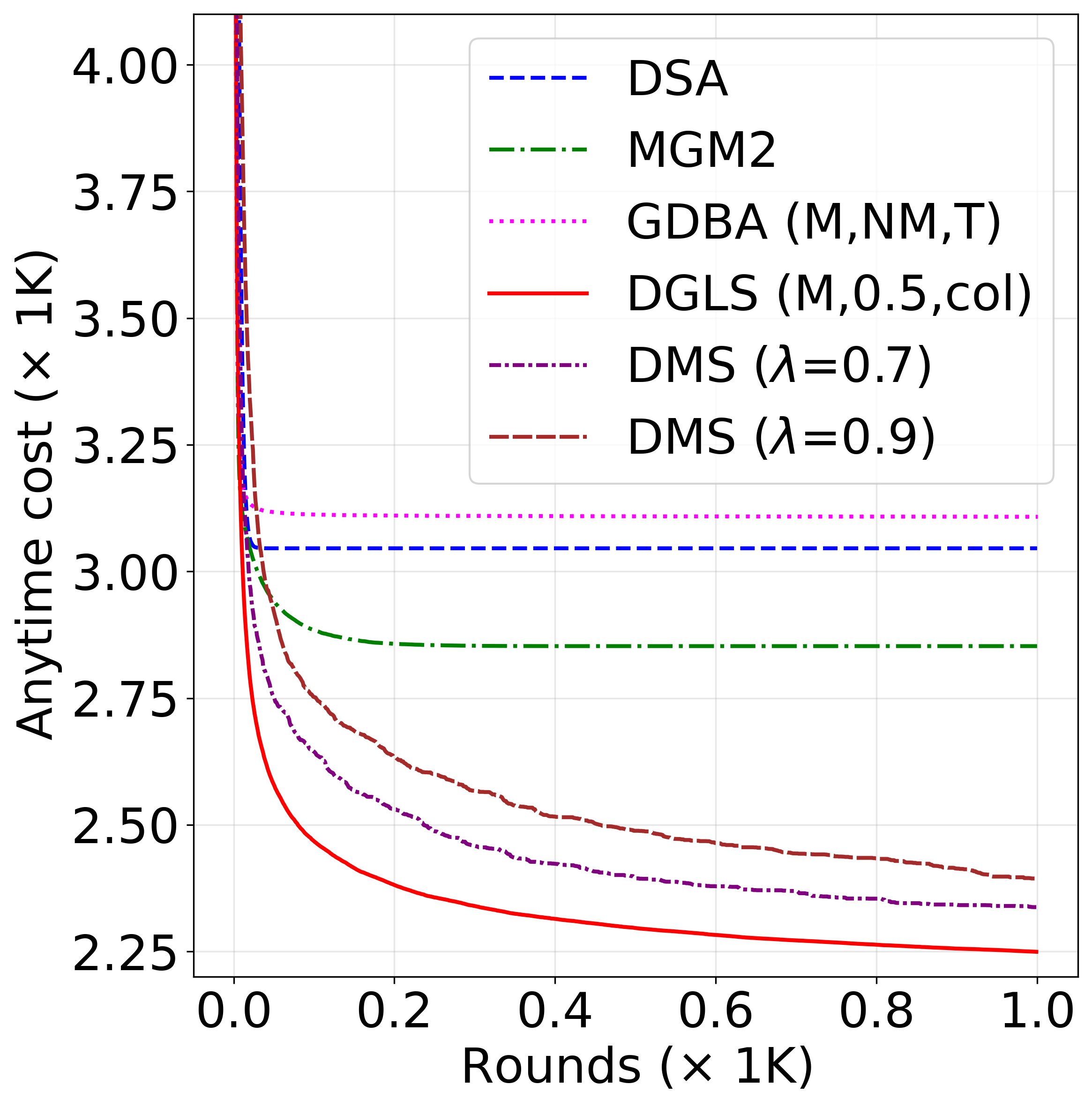}
        \caption{2D lattices}
        \label{fig:2d}
    \end{subfigure}
    \caption{Performance on topology-structured problems}
    \label{fig:topology}
\end{figure}
\subsubsection{Performance comparison} Empirically, we find DGLS variants $(M,0.5,col)$ and $(M,0.9,col)$ perform best on general-valued problems and cost-structured problems, respectively. Therefore, we consider these two variants for performance comparison. Figure~\ref{fig:random_dcop} presents the anytime results on both sparse and dense random DCOPs. It can be observed that our DGLS exhibits a fast convergence speed on sparse problems, quickly surpasses all baselines after about 50 rounds. GDBA, on the other hand, fails to effectively break out of local optima and is dominated by DSA in the sparse case. Besides, it is interesting to find that the gap between GDBA and our DGLS narrows on dense problems. This is because each constraint has a higher cost than the sparse setting. Therefore, agents in our DGLS trigger penalization more frequently (cf.~Algorithm~\ref{alg:adaptive}), which exhibits similar behavior to the GDBA under the $NM$ violation condition. Still, DGLS outperforms GDBA by a significant margin in the dense setting, thanks to the evaporation mechanism and synchronization scheme for coordinated penalty updates.

Figure~\ref{fig:topology} compares the anytime performance on topology-structured problems. It can be seen that GDBA performs poorly and is strictly dominated by all other competitors on both scale-free networks and 2D lattices, which highlights the inefficiency of GDBA in dealing with general-valued problems. In contrast, our DGLS effectively changes the problems' landscape through adaptive violation condition, evaporation and coordinated penalty update whenever local search gets trapped in local optima. These mechanisms enable selective penalization that directs local search to focus more on the constraints with high costs, and therefore result in a much steadier improvement over time. In fact, DGLS not only significantly outperforms all local search heuristics, but also improves DMS by 3.77\%--6.03\% on 2D lattices with $p$-value$<10^{-5}$.

Figure~\ref{fig:cost} presents the results on weighted graph-coloring problems and meeting scheduling problems. GDBA demonstrates great advantages over all competitors except DGLS on these cost-structured problems. This can be attributed to the multiple cost minimums present in each constraint (e.g., the cost entries for conflict-free colors in weighted graph coloring problems or time slots in meeting scheduling problems).  As a consequence, the cost modifiers in GDBA grow moderately compared to general-valued problems (cf.~Figure~\ref{fig:main}), thus the over-penalization and indiscriminate penalization are alleviated as a side effect. Nevertheless, GDBA is still strictly dominated by DGLS which explicitly implements selective penalization through the adaptive violation condition and evaporation mechanism. Notably, our DGLS surpasses all baselines within the first 50 rounds and continuously improves given more rounds, significantly outperforming DMS by 61.24\%--66.30\% and 5.47\%--9.45\% on weighted graph-coloring problems and meeting scheduling problems with $p$-value$<10^{-5}$, respectively.
\begin{figure}[t]
    \centering
    \begin{subfigure}{0.495\linewidth}
        \centering
        \includegraphics[width=\linewidth]{./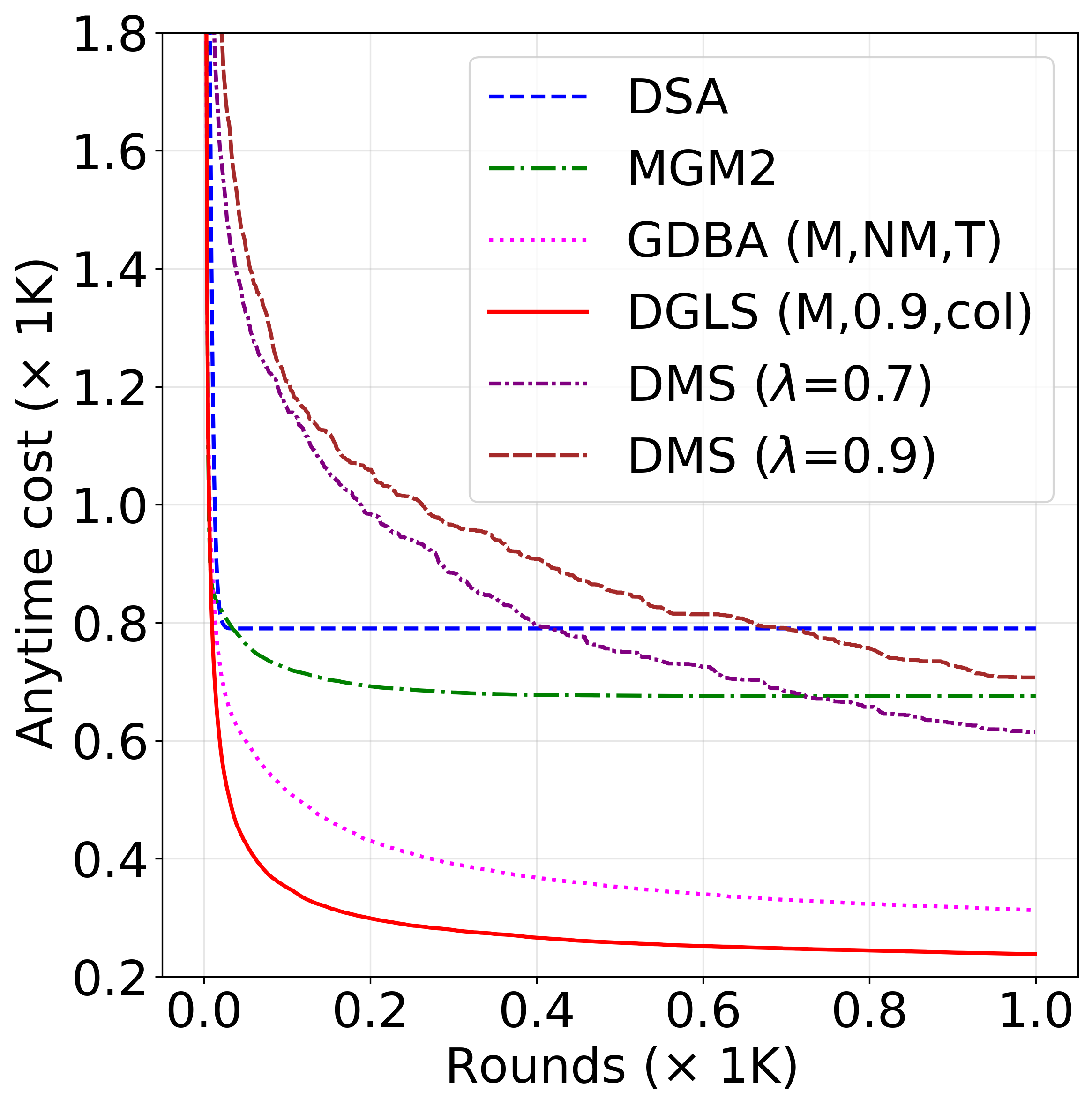}
        \caption{Weighted graph-coloring}
        \label{fig:wgc}
    \end{subfigure}
    \hfill % Add space between figures
    \begin{subfigure}{0.495\linewidth}
        \centering
        \includegraphics[width=\linewidth]{./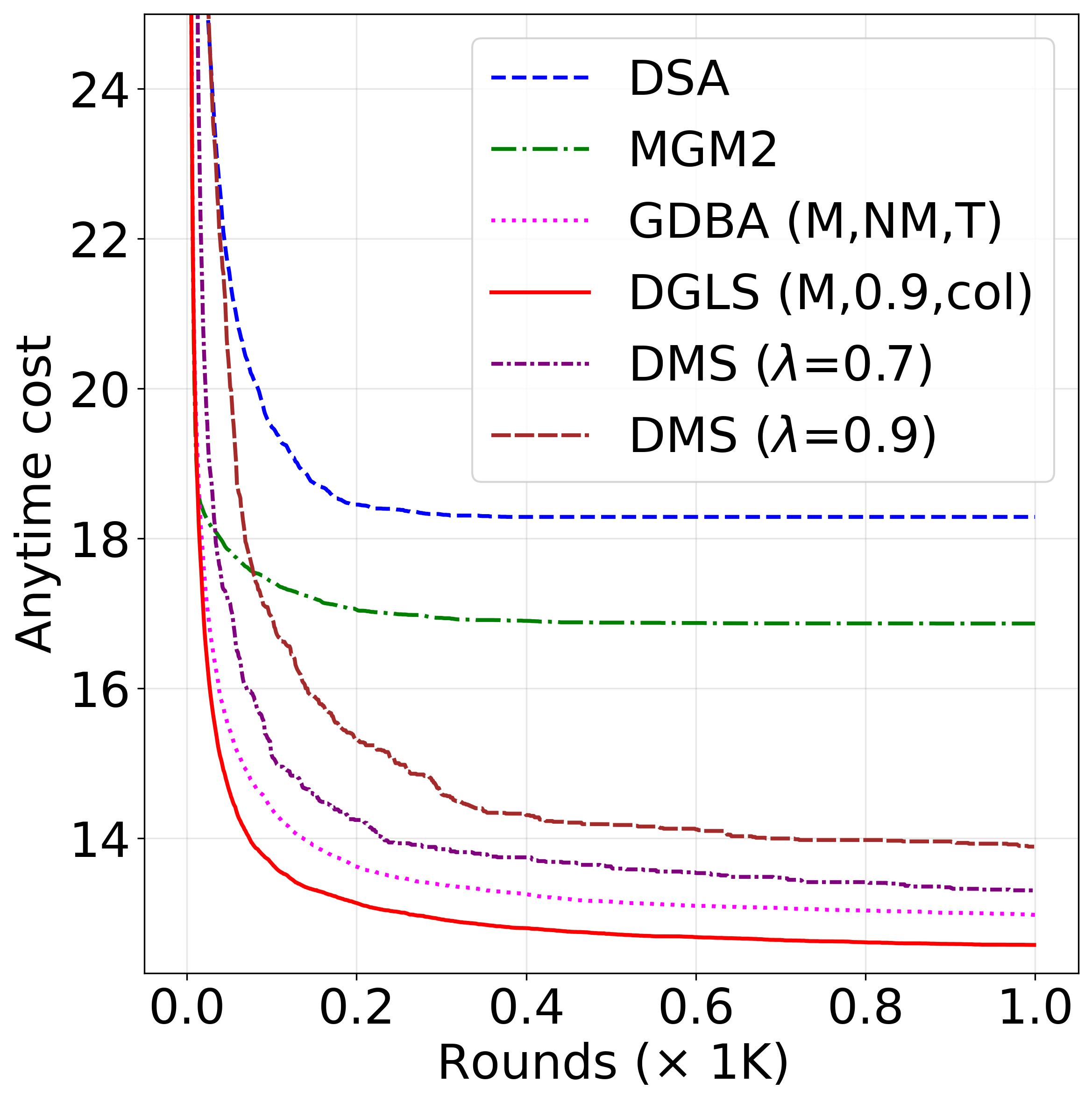}
        \caption{Meeting scheduling}
        \label{fig:ms}
    \end{subfigure}
    \caption{Performance on cost-structured problems}
    \label{fig:cost}
\end{figure}
\subsubsection{Ablation study} To understand how each component contributes to the success of our DGLS, we perform an ablation study on sparse random DCOPs and present results in Figure~\ref{fig:ablation}. Here, we consider the DGLS variant of $(M,0.5,tab)$ for ablation since it is algorithmically comparable to GDBA $(M,NM,T)$ due to the same manner and scope parameters. In fact, GDBA $(M,NM,T)$ can be recovered from DGLS $(M,0.5,tab)$ if adaptive violation condition (AVC), evaporation, and coordinated penalty update (CPU) are disabled. 

AVC tends to contribute most significantly to the anytime performance. Without AVC, DGLS indiscriminately penalizes each constraint that incurs a cost higher than the minimum cost, which leads to ineffective penalization and performs only slightly better than DSA. This also aligns with our observation that the over-aggressive constraint violation conditions in GDBA could severely limit its performance on general-valued DCOPs.
Evaporation, on the other hand, also plays an important role in controlling the magnitude of the penalty values. Without it, DGLS suffers from unbounded penalty growth and substantially slower convergence compared to the full DGLS. Interestingly, compared to DGLS w/o AVC, DGLS w/o evaporation exhibits better performance, which also highlights the advantage of our adaptive violation over the fixed violation rules in GDBA. Nonetheless, the combination of AVC and evaporation works synergistically to enable effective selective penalization, as demonstrated by the strong performance of DGLS w/o CPU and full DGLS. Finally, CPU ensures that agents optimize coherently given the changing cost modifiers, leading to moderate but consistent improvements.
\begin{figure}
    \centering
    \includegraphics[width=0.8\linewidth]{./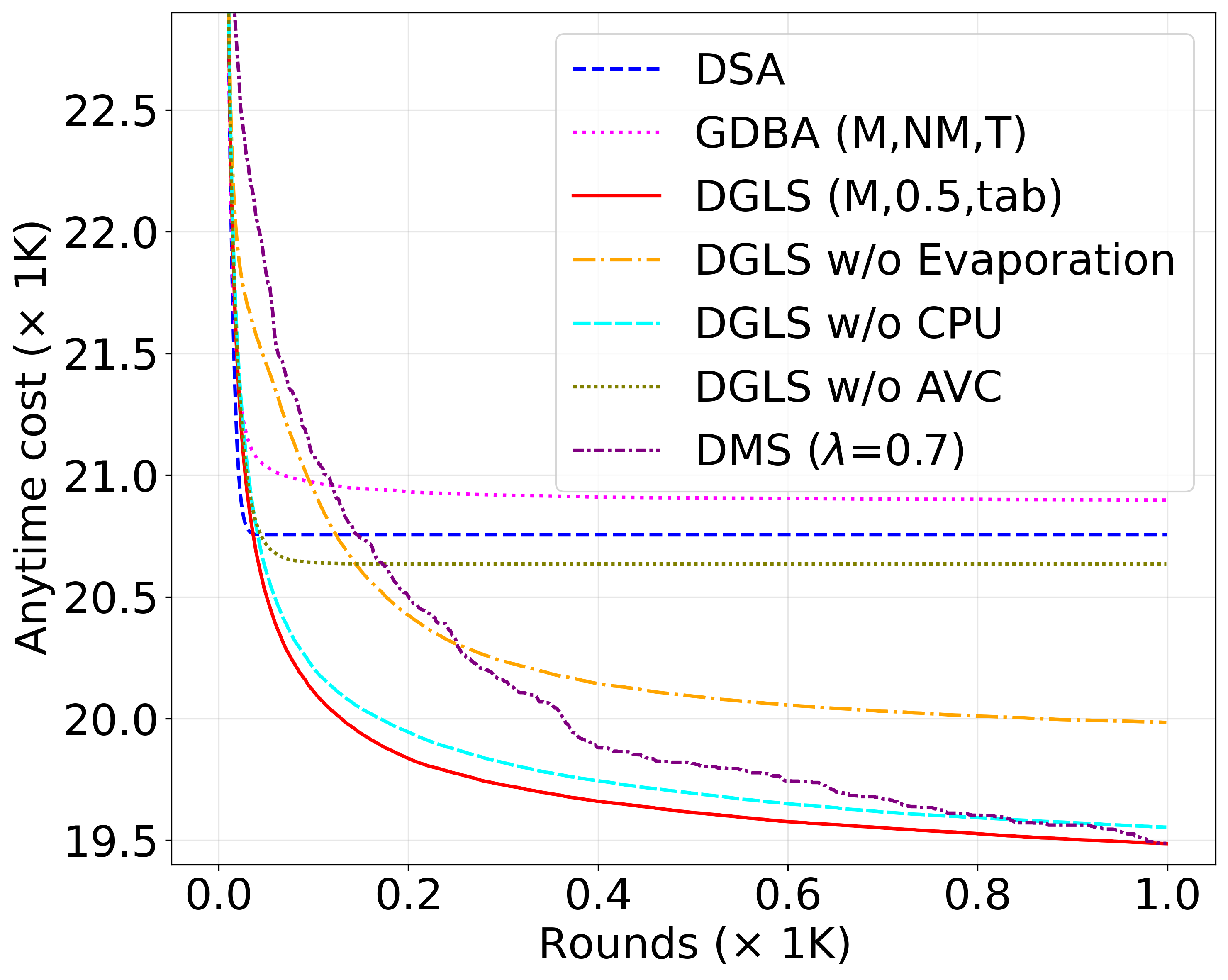}
    \caption{Ablation study on sparse random DCOPs}
    \label{fig:ablation}
\end{figure}
\section{Conclusion}
As an instantiation of GLS on DCOPs, GDBA aims to help local search break out of local optima by increasing the penalty values of violated constraints. However, its empirical benefits remain marginal on general-valued problems. This suboptimal performance stems from three key issues: over-aggressive constraint violation conditions, unbounded penalty accumulation, and uncoordinated penalty updates. Such pathologies lead to ubiquitous over-penalization and indiscriminate penalization, ultimately undermining the intended benefits of breakout.

We therefore present DGLS, a novel GLS framework for DCOPs that effectively addresses these issues by incorporating an adaptive violation condition to selectively penalize constraints with high cost, a penalty evaporation mechanism to control the magnitude of penalization, and a synchronization scheme for coordinated penalty updates. Theoretically, we show that the penalty values of our DGLS are bounded, and agents play a potential game where the potential function is the total augmented cost given the current cost modifiers. Our extensive empirical evaluations on various standard benchmarks confirm the superiority of DGLS over the existing local search heuristics, as well as the state-of-the-art Damped Max-sum on both general-valued and cost-structured problems.
\section*{Acknowledgments}
This research is supported by the Ministry of Education, Singapore, under its MOE AcRF Tier 2 Award MOE-T2EP20223-0003. Xinrun Wang is supported by Singapore Ministry of Education (MOE) Academic Research
Fund (AcRF) Tier 1 grant (No. MSS24C005). Any opinions, findings and conclusions or recommendations expressed in this material are those of the author(s) and do not reflect the views of the Ministry of Education, Singapore.

\bibliography{aaai2026}

\end{document}